\documentclass{article}

\usepackage{arxiv_default}

\usepackage[utf8]{inputenc}
\usepackage[T1]{fontenc}
\usepackage{hyperref}
\usepackage{url}
\usepackage{booktabs}
\usepackage{amsfonts}
\usepackage{amsmath}
\usepackage{amssymb}
\usepackage{amsthm}
\usepackage{nicefrac}
\usepackage{microtype}
\usepackage{xcolor}
\usepackage{graphicx}
\usepackage{subcaption}
\usepackage{algorithm}
\usepackage{algorithmic}
\usepackage{multirow}

\graphicspath{{figures/}{figures/unmask_order/}}

\newtheorem{theorem}{Theorem}[section]

\title{Decoding in Order-Agnostic Language Models: Chain-Rule Deviation and Uniform Spreading}

\author{
  Lin Yao$^{1,2}$ \\
  $^1$School of Computer Science, Shanghai Jiao Tong University, Shanghai, 200240, China \\
  $^2$Zhongguancun Academy, Beijing, 100097, China \\
  \texttt{lin.yao@sjtu.edu.cn}
}

\begin{document}

\maketitle

\begin{abstract}
Order-agnostic language models (OALMs), including discrete diffusion language models (dLLMs), are trained to predict masked tokens under arbitrary conditioning sets, allowing sequences to be generated or scored under arbitrary reveal orders at inference time. In LLaDA-2.1, we report three findings. First, the learned conditionals are not exact factorizations of a coherent joint distribution: changing only the reveal order shifts target log-likelihood by up to $0.49$ nats/token, so likelihood alone mixes content difficulty with path-dependent artifacts. Second, although confidence-first (CF) decoding is order-agnostic, its reveal orders are close to left-to-right (L2R) on content tokens. Third, we propose a complementary diagnostic based on the shape of the confidence trace. A uniform-spreading theorem shows that, at fixed total likelihood, target recoverability is maximized when per-step confidence is spread uniformly; the resulting deviation motivates $\mathrm{Var}(\log q_t)$ as a diagnostic for comparing decoding paths. Across C4 and four downstream benchmarks, low variance separates structured paths from random ordering, and variance is consistently associated with downstream correctness. These results support reporting mean confidence and confidence variance jointly when comparing OALM decoding paths.
\end{abstract}

\section{Introduction}
\label{sec:intro}

\begin{figure}[htbp]
    \centering
    \includegraphics[width=\textwidth]{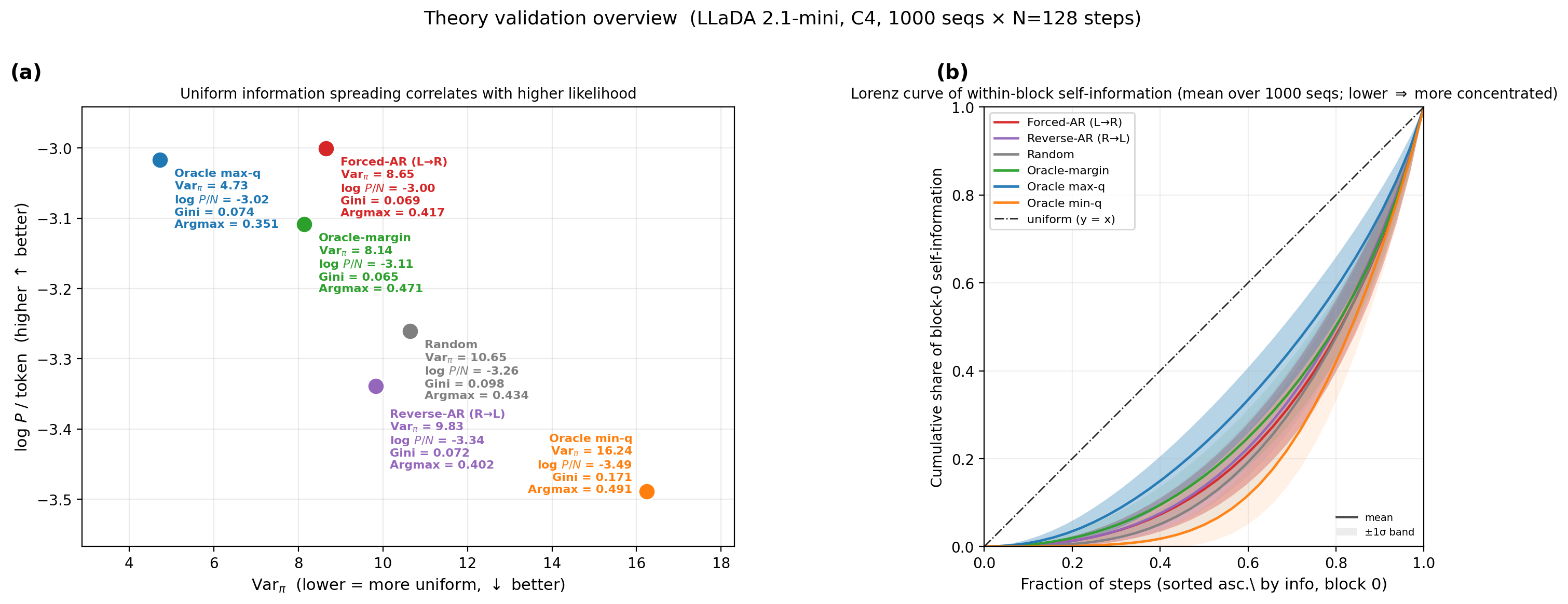}
    \caption{Fixed-sequence validation on $1{,}000$ C4 sequences (continuation $n{=}128$, block size $32$, LLaDA-2.1-mini).
    \textbf{(a)}~Per-strategy mean $\mathrm{Var}_\pi$ vs.\ mean $\log P/n$, annotated with the Gini-style concentration coefficient and single-step argmax accuracy.
    \textbf{(b)}~Lorenz curve of within-block self-information on the first block, averaged over $1{,}000$ sequences: $x$-axis is the fraction of steps sorted by ascending $-\log q_t$, $y$-axis is the cumulative share of $-\sum_t \log q_t$. The diagonal $y{=}x$ is the uniform schedule; curves below it indicate self-information concentrated in a few bottleneck steps.}
    \label{fig:theory_valid}
    \vspace{-0.75em}
\end{figure}

We use order-agnostic language models (OALMs) to denote a broad class of
language models trained to predict tokens under flexible conditioning patterns,
thereby supporting arbitrary inference orders. This class spans NADE-style order-agnostic
models~\cite{uria2014nade}, permutation language
modeling~\cite{yang2019xlnet}, arbitrary-order autoregressive
models~\cite{hoogeboom2021ardm}, and recent discrete diffusion language
models. In the dLLM branch, models such as MDLM~\cite{sahoo2024mdlm}
and LLaDA~\cite{nie2025llada} are trained with random masks at varying mask
ratios, so a single trained network~$p_\theta$
can infer in any reveal order. This makes dLLMs a natural testbed for isolating
reveal-order effects: we can hold the model and prompt fixed while varying only
the order, instantiated by representative strategies such as left-to-right
(L2R), right-to-left, random, and confidence-first (CF) variants including
\textsc{Max-prob} and \textsc{Top-margin}~\cite{ghazvininejad2019mask,
chang2022maskgit}.
The most direct way to compare orders is to score the same target
sequence~$\mathbf{x}^*$ along each path. At step~$t$, let $R_{t-1}$ be the tokens
already revealed by order~$\pi$, and define
$q_t(\pi)=p_\theta(x^*_{\pi(t)}\mid R_{t-1},c)$. One might then compare
orders by the summed score $\sum_t \log q_t(\pi)$.
If the model's conditionals were consistent with a coherent joint
distribution, the chain rule would make this sum independent of~$\pi$.
In our fixed-sequence evaluation on
LLaDA-2.1-mini, it is not: on $1{,}000$ randomly selected C4~\cite{raffel2020c4} examples, each split into a
prefix and a target continuation (Fig.~\ref{fig:theory_valid}), the choice
of reveal order shifts $\log P/n$ by up to $0.49$ nats per token. At
this magnitude, the conditional predictions should not be interpreted as
different exact factorizations of one joint distribution. Total
log-probability is therefore not order-invariant at the scale needed for
strategy comparison: it mixes content confidence with ordering artifacts
and is insufficient as a standalone quality signal.

To address this, we identify a complementary diagnostic axis. Even when the chain rule pins
$\prod_t q_t = P$ regardless of the reveal order~$\pi$, $P$ is not the
quantity greedy decoding achieves in practice. Instead, recovering the target
sequence becomes an allocation problem: at each step, the decoder commits
to an argmax over the vocabulary, and success depends not only on the
isolated confidence $q_t = p_\theta(x^*_{\pi(t)} \mid R_{t-1})$, but on
how decisively the target wins against its local competitors. Although the reveal order does not change the model parameters, it determines which
contexts $R_{t-1}$ are encountered, and thus how the total
budget $P$ is allocated across tokens to maintain a winning margin.
This sensitivity to the allocation shape arises because greedy
decoding acts as a chain of local decisions: a single bottleneck
step---where the allocated $q_t$ is insufficient to secure the argmax---can
determine whether the path succeeds, regardless of how high the
probabilities are at other steps. Consequently, a path with $n$
moderately confident steps can be more reliable than one that is highly
confident overall but contains a single low-confidence bottleneck. We
formalize this through the uniform-spreading theorem
(Theorem~\ref{thm:uniform}), proving that at a fixed total $P$, target recoverability is
uniquely maximized when confidence is spread uniformly ($q_t = P^{1/n}$),
with the structural cost of non-uniformity captured by
$\mathrm{Var}(\log q_t)$. Thus, $\log P$ and $\mathrm{Var}(\log q_t)$
define two complementary diagnostic axes: the former measures the
model's total budget, while the latter captures the efficiency of the
chosen reveal order.

Empirically, the variance axis explains both strategy-level and downstream
patterns. On C4 generation, structured one-token-per-step strategies
(\textsc{Forced-AR}, \textsc{Max-prob}, and \textsc{Top-margin}) have similar
external perplexity but substantially lower $\mathrm{Var}(\log q_t)$ than
\textsc{Random}. Their similarity has a path-level explanation:
the non-EOS reveal orders of \textsc{Max-prob} and \textsc{Top-margin} closely match L2R
(\S\ref{sec:reveal_order}), so the structured strategies trace near-identical
content paths, while short-answer prompts additionally induce a tail-EOS
cascade under confidence-first rules. On downstream benchmarks (HellaSwag,
CMath, MMLU-Pro, IFEval), lower-variance confidence traces are consistently
associated with correct generations
(\S\ref{sec:downstream}).

\paragraph{Contributions.}
\begin{itemize}
\item \textbf{We document that the model is not a coherent joint distribution.} On LLaDA-2.1-mini, $\log P/n$ of a fixed target
spans $\sim 0.5$ nats/token across reveal orders, exceeding numerical
tolerance and revealing that the learned conditionals are not mutually
consistent factorizations of a single joint distribution
(\S\ref{sec:theory_valid}).
\item \textbf{We characterize why CF reveal orders are close to L2R.} Reveal-order analysis
shows that confidence-first strategies (\textsc{Max-prob} and \textsc{Top-margin})
traverse near-identical \emph{content} reveal paths to strict left-to-right decoding.
The main departure is on short-answer prompts, where confidence-first rules additionally trigger a
tail-EOS cascade that \textsc{Forced-AR} cannot exploit (\S\ref{sec:reveal_order}).
\item \textbf{We report variance as a supplementary diagnostic for performance reliability.} A uniform-spreading theorem
(Theorem~\ref{thm:uniform}) shows that, at fixed total $P$, target
recoverability is uniquely maximized at uniform $q_t$ with deviation governed by
$\mathrm{Var}(\log q_t)$, identifying $\mathrm{Var}$ as an
order-dependent diagnostic that complements $\log P$. On four downstream
benchmarks, variance is consistently associated with correctness, ruling out an
explanation based only on the poor aggregate performance of \textsc{Random}
(\S\ref{sec:downstream}).
\end{itemize}

\begin{figure}[htbp]
    \centering
    \includegraphics[width=\textwidth, trim=1 220 1 1, clip]{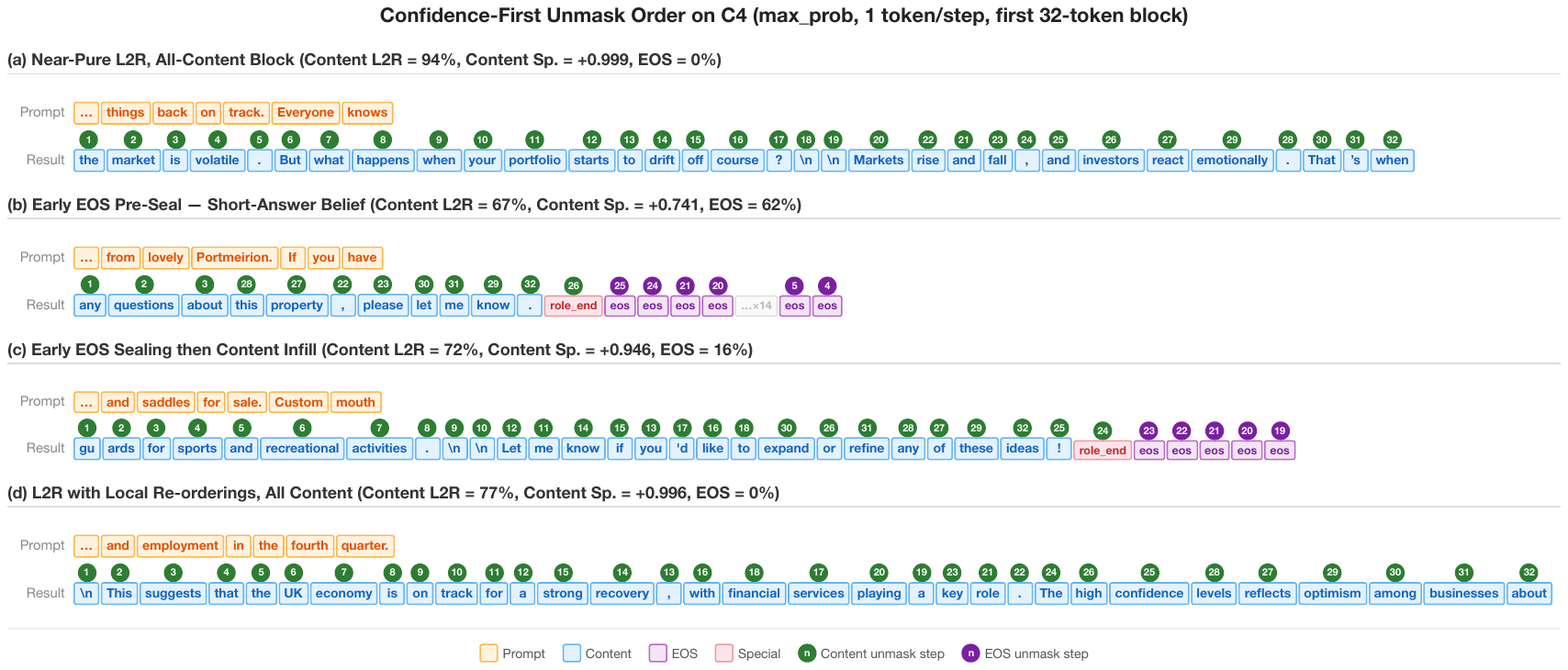}
    \caption{\textsc{Max-prob} unmask order on four representative C4 prompts (block~$0$ shown). Tokens are in reading order; circled numbers indicate the unmask step ($1$ = first). Blue = content, purple = EOS, red = special.}
    \label{fig:unmask_order}
    \vspace{-0.75em}
\end{figure}

\section{Related Work}
\label{sec:related}

\paragraph{Order-agnostic and discrete diffusion language models.}
Order-agnostic generative models date back to NADE-style
training~\cite{uria2014nade}; the family includes
XLNet~\cite{yang2019xlnet} and ARDM~\cite{hoogeboom2021ardm}, as well
as the discrete-diffusion branch: MDLM~\cite{sahoo2024mdlm},
SEDD~\cite{lou2024sedd}, and
LLaDA~\cite{nie2025llada,nie2025llada2,bie2026llada21}. These works focus on
training distributions that support flexible conditioning patterns. We
take such a trained OALM, instantiated by a dLLM in our experiments, as given
and study the inference paths it induces.

\paragraph{Decoding strategies for masked / diffusion LMs.}
A growing literature explores decoding heuristics for OALMs and dLLMs:
confidence-based unmasking
\cite{ghazvininejad2019mask, chang2022maskgit}, block-AR schedules,
RCR-style remasking and rate control~\cite{wang2025remdm}, denoising-entropy
controls~\cite{chen2025denoising_entropy}, and search over token
orders~\cite{chen2026ordertokensearch,asano2026wheretounmask}. More recent
work studies information-gain criteria for choosing the next position
\cite{yang2026infogain}. Closest in spirit are
information-theoretic analyses of generation order and parallel decoding in
masked diffusion models~\cite{zhang2026genorder}. These works
primarily ask how to choose or improve an order, or how parallel factorization
changes the sampling problem; we ask how to interpret the confidence trace
produced by an order.

\paragraph{AR vs.\ non-AR decoders.}
The longstanding question of how non-AR decoders compare to AR has been
addressed empirically in many domains, with non-AR translation as a canonical
case~\cite{gu2018nat}. Recent work has also examined fundamental limits and
trade-offs of diffusion language models~\cite{zhong2026limits} and the
ability of any-order architectures to subsume AR as a special
case~\cite{karami2026armd}. We instead operate within a single OALM,
where \textsc{Forced-AR}, CF, and random orders are decoding choices on
the same network, isolating the effect of the path $\pi$ from that of the
parameters $\theta$.

\paragraph{Order-aware training and ARMD-style hybrids.}
A complementary line of work modifies training so that the model better
matches a chosen inference order: training on the worst-case
position~\cite{kim2025icml_train_worst}, ARMD-style training that bridges
AR and any-order objectives~\cite{karami2026armd}, and DCD-style
denoising-curriculum training~\cite{shen2026dcd}. These methods change
$p_\theta$, while our experiments hold $p_\theta$ fixed and vary only the
inference-side path.
Our contribution is therefore not a new decoding rule, but an analysis of the
confidence trace induced by a fixed trained OALM, with variance serving as an
order-sensitive diagnostic complementary to likelihood.

\section{Theory of Uniform Spreading}
\label{sec:theory}

This section develops the uniform-spreading theorem, which provides a
principled criterion for comparing confidence profiles induced by
CF, \textsc{Forced-AR}, and other order-dependent decoding strategies.
In our dLLM instantiation of an OALM, the model is trained to predict masked tokens given arbitrary subsets of revealed positions, with each position independently masked at a uniformly sampled rate from $U(0,1)$ per example~\cite{nie2025llada}. Every mask pattern has nonzero training probability, so a single trained network is implicitly trained on every inference order, making the comparison of decoding orders within one model meaningful. dLLMs further use block-causal attention: at any position, the model attends bidirectionally to all tokens within the same block (including mask tokens) and causally to all preceding blocks, and in the semi-autoregressive block format blocks are processed strictly left-to-right so that reveal-order strategies choose only the order of positions inside the current block. Consequently, \textsc{Forced-AR} decoding on a dLLM is not equivalent to a true causal AR model, which strictly masks all future positions. All theory and empirical claims about $\pi$ should therefore be read as claims about within-block reveal orders, not about arbitrary global permutations of the $n$ target positions. The main text further restricts to the one-token-per-step setting, in which $\pi$ reveals exactly one position per forward pass.

\subsection{Decoding Decomposition}
\label{sec:decomposition}

Let $\mathcal{V}$ be the vocabulary and $c$ the prompt context. A decoding policy $\pi$ selects at each step~$t$ a position $\pi(t)$ to unmask given the revealed set $R_{t-1}$. Fix a target sequence $\mathbf{x}^* = (x^*_1, \ldots, x^*_n) \in \mathcal{V}^n$ conditioned on context~$c$. If the conditionals $\{p_\theta(\cdot\mid R_{t-1}, c)\}$ arise from a single coherent joint distribution~$p$ over $\mathbf{x}$ given $c$, with $P \triangleq p(\mathbf{x}^*\mid c)$, then the chain rule of~$p$ guarantees that for any policy~$\pi$,
\begin{equation}\label{eq:chain}
\prod_{t=1}^{n} q_t(\pi) = P, \qquad q_t(\pi) \triangleq p\!\left(x^*_{\pi(t)} \mid R_{t-1}, c\right),
\end{equation}
so the product is order-invariant while the distribution of $\{q_t\}$ across steps is not. For the trained $p_\theta$, however, the conditionals need not arise from any single joint, so $\prod_t q_t(\pi)$ may itself depend on $\pi$.

To isolate the allocation question from the coherence question, we analytically hold $P$ fixed and ask how the per-step profile $\{q_t(\pi)\}$ affects the \emph{target recoverability} $A(\pi)$---the probability that the decoder recovers the entire target~$\mathbf{x}^*$. For greedy decoding at step $t$, the decoder commits to $\arg\max_v p_\theta(v\mid R_{t-1},c)$, so the target is recovered only when it attains this argmax; this requires $q_t$ to exceed every competitor, and the outcome therefore depends on how the residual mass $1-q_t$ is distributed among competitor tokens. Let $f(q_t)$ denote the probability that greedy recovers the target at step~$t$; under a standard modeling assumption on the competitor distribution (deferred to App.~\ref{app:noise_model}), this probability depends on the conditional distribution only through $q_t$. Under independence of greedy success across steps (App.~\ref{app:noise_model}), $A(\pi)$ factorizes as
\begin{equation}\label{eq:greedy_acc}
A(\pi) = \prod_{t=1}^{n} f\!\bigl(q_t(\pi)\bigr)
  = \underbrace{\prod_{t=1}^{n} q_t(\pi)}_{= P\;\text{(order-invariant)}}
    \;\cdot\;
    \underbrace{\prod_{t=1}^{n} \frac{f(q_t)}{q_t}}_{D(\pi)\;\text{(order-dependent)}},
\end{equation}
so all order-dependence in $A(\pi)$ is carried by the distortion $D(\pi)$.

Since $P$ is fixed, maximizing $A(\pi)$ reduces to maximizing $D(\pi)$ under the constraint $\prod_t q_t = P$, i.e.\ maximizing $\sum_t \log f(q_t)$ with $\sum_t \log q_t$ held fixed. The key analytic property that makes this tractable is that $\log f(e^y)$ is strictly concave in $y=\log q$; see App.~\ref{app:proof_concavity} for the proof.

\subsection{Uniform Spreading}
\label{sec:uniform}

\begin{theorem}[Uniform Spreading]\label{thm:uniform}
If the log-recoverability is strictly concave with respect to the log-probability, then among all step distributions $(q_1, \ldots, q_n)$ satisfying $\prod_t q_t = P$, the target recoverability $A = \prod_t f(q_t)$ is uniquely maximized when confidence is spread uniformly across all steps:
\[
q_1 = q_2 = \cdots = q_n = P^{1/n}.
\]
\end{theorem}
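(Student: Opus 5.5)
The plan is to pass to log-coordinates and apply Jensen's inequality to the strictly concave function given by the hypothesis. Set $y_t = \log q_t \in (-\infty, 0]$ and define $g(y) \triangleq \log f(e^y)$. By assumption $g$ is strictly concave on the admissible range of $y$; App.~\ref{app:proof_concavity} establishes this under the noise model of App.~\ref{app:noise_model}. Since $\log$ is strictly increasing, maximizing $A=\prod_t f(q_t)$ is equivalent to maximizing
\[
\log A = \sum_{t=1}^n g(y_t).
\]
The constraint $\prod_t q_t = P$ becomes the linear constraint $\sum_t y_t = \log P$. The problem is therefore to maximize a sum of identical strictly concave functions over an affine slice of $(-\infty,0]^n$.

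The key step is Jensen's inequality for $g$ with equal weights $1/n$. Write $\bar y = \frac1n\sum_t y_t = \frac{\log P}{n}$. Then
\[
\frac1n\sum_{t=1}^n g(y_t) \le g(\bar y),
\]
so $\log A \le n\, g(\log P/n) = n\log f(P^{1/n})$. Equality holds for the uniform profile $y_t \equiv \log P/n$, i.e.\ $q_t = P^{1/n}$. This profile is feasible: it satisfies the product constraint, and $P^{1/n}\in(0,1]$ because $P\in(0,1]$.

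For uniqueness, use the strict form of Jensen. If the $y_t$ are not all equal, pick indices $i,j$ with $y_i\ne y_j$ and replace both by their average $(y_i+y_j)/2$. This keeps $\sum_t y_t$ fixed and, by strict concavity, strictly increases $g(y_i)+g(y_j)$. Hence no non-uniform profile attains the maximum. Equivalently, strict Jensen gives strict inequality whenever the $y_t$ are not all equal. Exponentiating returns the statement for $A$, and it also yields the decomposition $A = P\cdot D(\pi)$ of~\eqref{eq:greedy_acc} with $D$ maximized at the uniform allocation.

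The main obstacle is bookkeeping about the domain rather than the inequality itself. First, $f$ must be positive on $(0,1]$ so that $\log f$ is finite. Steps with $f(q_t)=0$ would make $A=0$, and these can be discarded since they cannot be maximizers provided $f(P^{1/n})>0$. Second, the domain of $g$ must be convex, which holds for the interval $(-\infty,0]$, so that averages of feasible points remain feasible. I would state these as standing conventions inherited from App.~\ref{app:noise_model}. Finally, I would remark that a second-order expansion of $g$ around $\bar y$ gives
\[
\log A \approx n\,g(\bar y) + \tfrac12 g''(\bar y)\, n\,\mathrm{Var}(y_t),
\]
which connects the deficit to $\mathrm{Var}(\log q_t)$ as the paper subsequently uses. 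This expansion is not needed for the theorem itself.
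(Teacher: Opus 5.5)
Your proposal is correct and follows the paper's proof. Both pass to $y_t=\log q_t$, turn the product constraint into $\sum_t y_t=\log P$, and apply Jensen's inequality to the strictly concave $g(y)=\log f(e^y)$, giving $A\le f(P^{1/n})^n$ with equality only at the uniform profile. The one cosmetic difference is uniqueness: the paper also argues it through the Lagrangian condition $g'(y_t)=\lambda$ with $g'$ strictly decreasing, whereas your pairwise-averaging argument gets it directly from strict concavity and does not need $g$ to be differentiable.
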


\begin{proof}[Proof sketch]
Let $y_t = \log q_t$ and $L = \log P$. The constraint is $\sum_t y_t = L$, and our goal is to maximize $\log A = \sum_t \log f(e^{y_t})$. By the concavity condition above, Jensen's inequality gives
\[
\frac{1}{n}\sum_{t=1}^n \log f(e^{y_t}) \;\leq\; \log f\!\left(\exp\left(\frac{1}{n}\sum_{t=1}^n y_t\right)\right) = \log f\!\left(P^{1/n}\right),
\]
with equality if and only if $y_1 = \cdots = y_n$, yielding the upper bound $A \leq f(P^{1/n})^n$ on every per-step profile. A complete proof, including the uniqueness of the maximizer, is given in App.~\ref{app:proof_uniform}.
\end{proof}

The theorem fixes the sequence likelihood $P$ and isolates the effect of the per-step profile $\{q_t\}$ on decoding recoverability. The intuition relies on the decoding bottleneck: because $f$ is nonlinear, a single low-confidence step damages the target recoverability $A$ super-linearly. Thus, for a given total log-probability $\log P$, the optimal strategy penalizes variance and uniformly distributes the task difficulty.
The theorem deliberately begins with the simplified case where the total target likelihood $P$ is fixed: if reveal-order imbalance can affect recoverability even there, then variance becomes even more important in the trained model, where the effective likelihood may itself vary with order.

This theoretical optimum motivates the two scalar diagnostics we use to evaluate strategies in \S\ref{sec:experiments}: 
the mean log-confidence $\overline{\log q} := \tfrac{1}{n}\sum_t \log q_t$, which measures the average self-rated difficulty along the path, and the log-probability variance $\mathrm{Var}_\pi := \tfrac{1}{n}\sum_t (\log q_t - \overline{\log q})^2$, which measures deviation from the uniform optimum. Theorem~\ref{thm:uniform} predicts that, when paths have comparable $\overline{\log q}$, lower $\mathrm{Var}_\pi$ should imply higher recoverability.

\section{Experiments}
\label{sec:experiments}

\subsection{Setup and Strategies}
\label{sec:common_setup}

All experiments use LLaDA-2.1-mini~\cite{bie2026llada21} with block size $32$ under the one-token-per-step setting of \S\ref{sec:theory}, so each block takes $32$ reveal steps.
The strategy set separates practical decoders from diagnostic controls. The practical decoders are:
\begin{itemize}
  \item \textbf{\textsc{Forced-AR}}: reveal positions left-to-right within each block.
  \item \textbf{\textsc{Max-prob}} (CF): for each unrevealed position $j$ in the current block, compute its score $s_j = \max_{v\in\mathcal{V}} p_\theta(v\mid R_{t-1})$, i.e.\ the largest probability assigned to any vocabulary token at position~$j$; reveal the position with the largest $s_j$.
  \item \textbf{\textsc{Top-margin}} (CF): for each unrevealed position $j$, score it by the gap between the top two tokens of the current conditional at $j$, $s_j = p_\theta(v_1\mid R_{t-1}) - p_\theta(v_2\mid R_{t-1})$, where $v_1,v_2\in\mathcal{V}$ are those top two tokens; reveal the position with the largest $s_j$.
\end{itemize}
Two deployable controls isolate order effects:
\begin{itemize}
  \item \textbf{\textsc{Random}}: reveal a uniformly random unrevealed position at each step; the random seed is fixed across runs.
  \item \textbf{\textsc{Reverse-AR}}: reveal positions right-to-left within each block, testing whether the left-to-right direction itself explains the observed behavior.
\end{itemize}
For the fixed-sequence experiment only, we additionally use three target-aware oracle orderings. Each scores the unrevealed position $j$ by a target-dependent criterion and reveals the position with the largest score:
\begin{itemize}
  \item \textbf{\textsc{Oracle max-$q$}}: score position $j$ by the model's probability of the target token there, $s_j = q_j := p_\theta(x_j^*\mid R_{t-1})$; reveal the most confident target position first.
  \item \textbf{\textsc{Oracle margin}}: score position $j$ by the gap between the target probability and its largest competitor, $s_j = p_\theta(x_j^*\mid R_{t-1}) - \max_{v\in\mathcal{V},\,v\neq x_j^*} p_\theta(v\mid R_{t-1})$; reveal the position where the target most decisively beats its strongest competitor.
  \item \textbf{\textsc{Oracle min-$q$}}: score position $j$ by $s_j = -q_j$, i.e.\ reveal the least confident target position first.
\end{itemize}
\textsc{Oracle max-$q$} and \textsc{Oracle margin} are the target-aware counterparts of the practical CF rules \textsc{Max-prob} and \textsc{Top-margin} (replacing argmax-based scores with target-based ones), while \textsc{Oracle min-$q$} is the opposite ordering to \textsc{Oracle max-$q$} and acts as a lower-bound control. These oracle orderings are diagnostic rather than deployable because they require the known target $\mathbf{x}^*$; ``Oracle'' refers to knowledge of $\mathbf{x}^*$, not access to $p$, and probabilities are still computed by $p_\theta$.

\paragraph{Datasets and protocol.} The main C4 experiments use a fixed evaluation subset of $1{,}000$ examples selected from the C4 validation split~\cite{raffel2020c4}. Each example is tokenized into a $32$-token prefix followed by a $128$-token region, corresponding to four blocks and $128$ reveal steps under the one-token-per-step setting. The fixed-sequence validation (\S\ref{sec:theory_valid}) treats this region as the held-out target and evaluates order-dependent likelihood traces. Open-ended generation (\S\ref{sec:gen}) and the unmask-order analysis (\S\ref{sec:reveal_order}) reuse the same prefixes but let the model generate its own $128$-token continuation. Downstream evaluation (\S\ref{sec:downstream}) uses the same decoding implementation on HellaSwag~\cite{zellers2019hellaswag}, CMath~\cite{wei2023cmath}, MMLU-Pro~\cite{wang2024mmlupro}, and IFEval~\cite{zhou2023ifeval}, with task-specific prompt lengths and stopping rules detailed in App.~\ref{app:downstream_protocol}.

\paragraph{Metrics.} The two theory metrics from \S\ref{sec:theory} are $\overline{\log q}$ and $\mathrm{Var}(\log q)$, where $q_t$ is the probability assigned at the revealed position to the target token (fixed-sequence scoring) or to the selected token (generation); in generation we also report their content-only variants, which exclude EOS tokens. Output-quality metrics are EOS\% (fraction of generated EOS tokens), External PPL (GPT-2 Large~\cite{radford2019gpt2}), and Distinct-3 (fraction of unique trigrams~\cite{li2016diversity}). The fixed-sequence validation additionally reports single-step argmax accuracy and a Gini-style concentration coefficient over per-step within-block self-information as auxiliary statistics.

\subsection{Uniform-Spreading Validation: Oracle Orderings on Fixed Sequences}
\label{sec:theory_valid}

We first test the two diagnostics motivated by Theorem~\ref{thm:uniform} in a controlled setting where the $128$-token continuation of each C4 example is held fixed as the target $\mathbf{x}^*$ rather than generated (Fig.~\ref{fig:theory_valid}). Knowing $\mathbf{x}^*$ lets us evaluate the target-aware oracle orderings of \S\ref{sec:common_setup} alongside the three non-CF baselines (\textsc{Forced-AR}, \textsc{Reverse-AR}, \textsc{Random}); all statistics are computed per sequence and averaged over the $1{,}000$ examples.

\paragraph{Finding 1: the learned conditionals are not order-coherent.}
A coherent joint $p^*$ would, by the chain rule, give the same total log-likelihood $\log P = \sum_t \log p^*(x^*_{\pi(t)}\mid R_{t-1})$ for every reveal order $\pi$. In practice $\log P/n$ ranges from $-3.00$ (Forced-AR) to $-3.49$ (Oracle min-$q$), a $\sim 0.5$-nat-per-token spread (Fig.~\ref{fig:theory_valid}a). Thus $p_\theta(\cdot\mid R_{t-1})$ depends materially on the path, and $\log P$ alone is not order-invariant at the resolution needed for strategy comparison.

\paragraph{Finding 2: $\mathrm{Var}_\pi$ is a non-redundant axis driven by bottleneck steps.}
Across the six strategies in Fig.~\ref{fig:theory_valid}a, $\mathrm{Var}_\pi$ is correlated with $\log P/n$ (Spearman $\rho \approx -0.77$) but not collinear with it, and there are non-monotone pairs such as \textsc{Forced-AR} vs.\ \textsc{Oracle max-$q$} (nearly identical $\log P/n$, Var differs by $\sim\!2\!\times$) and \textsc{Random} vs.\ \textsc{Reverse-AR} (Var ranked oppositely to $\log P/n$). The mechanism behind this extra axis is visible in Fig.~\ref{fig:theory_valid}b: high-variance orderings push the within-block Lorenz curve farther below the uniform diagonal, i.e.\ self-information is concentrated in a few bottleneck steps rather than spread evenly. We therefore report both: $\log P$ captures the path's absolute model confidence, while $\mathrm{Var}_\pi$ captures the severity of bottleneck steps along the path.

\subsection{Structured Strategies Traverse Near-Identical Reveal Orders}
\label{sec:reveal_order}
\label{sec:gen}

We now drop the ground-truth assumption: each strategy decodes its own continuation from a prompt, so the per-step $q_t$ is the model's own argmax probability rather than the probability assigned to a known $x^*$.
This removes the oracle strategies from the testable set but adds two things the fixed-sequence regime cannot offer:
(i) an open-ended generation distribution, which matches practical deployment;
(ii) downstream-quality measurements (external perplexity, Distinct-3, EOS\%) that are only defined when the model produces text.

\begin{table}
\centering
\caption{Generated-sequence diagnostics and open-ended quality on $1{,}000$ C4 prompts. Each strategy generates its own continuation with one-token-per-step decoding and no correction; content-only metrics exclude EOS tokens. Best per metric is in \textbf{bold}.}
\label{tab:main}
\small
\setlength{\tabcolsep}{6pt}
\begin{tabular}{lcccc}
\toprule
Metric & Forced-AR & Max-prob & Top-margin & Random \\
\midrule
Mean $\log q$ $\uparrow$ & $-$0.631 & $-$0.585 & \textbf{$-$0.566} & $-$1.330 \\
Content mean $\log q$ $\uparrow$ & $-$0.644 & \textbf{$-$0.634} & $-$0.649 & $-$1.482 \\
Var$(\log q)$ $\downarrow$ & 0.392 & \textbf{0.286} & 0.315 & 0.990 \\
Content Var$(\log q)$ $\downarrow$ & 0.399 & \textbf{0.299} & 0.344 & 1.038 \\
\midrule
EOS\%$\downarrow$ & \textbf{2.4\%} & 7.9\% & 11.9\% & 12.4\% \\
External PPL $\downarrow$ & 7.56 & 7.53 & \textbf{7.48} & 11.65 \\
Distinct-3 $\uparrow$ & \textbf{0.928} & 0.907 & 0.884 & 0.896 \\
\bottomrule
\end{tabular}
\vspace{-0.7em}
\end{table}

\paragraph{Finding 1: on content tokens, structured strategies reduce to near-L2R reveal paths.}
On block~$0$, after excluding EOS reveals, the Spearman rank correlation with strict L2R is $89.1\%$ for \textsc{Max-prob} and $80.7\%$ for \textsc{Top-margin}. The two adaptive confidence-first rules therefore traverse \emph{content} paths close to L2R, so the three structured orderings see nearly the same sequence of contexts $\{R_{t-1}\}$ at reveal time and consequently cluster together on $\overline{\log q}$, $\mathrm{Var}(\log q)$, and external quality in Table~\ref{tab:main}; \textsc{Random}, whose path has no L2R structure, is the outlier on every metric. Panels~(a) and~(d) of Fig.~\ref{fig:unmask_order} illustrate this regime: panel~(a) is a near-pure L2R trace with every step advancing the rightward frontier, and panel~(d) is an all-content trace where \textsc{Max-prob} permutes only a few adjacent positions (local swaps among competing high-confidence neighbors) without disturbing the global rank order---consistent with the high but sub-unity content Spearman. The underlying driver is locality: the position immediately right of the prompt boundary has the strongest mutual information with the revealed context ($d{=}1$ neighbor), so confidence-first decoding tends to begin there and expand rightward.

\paragraph{Finding 2: confidence-first strategies pre-seal the EOS tail once the answer's right boundary becomes certain.}
Once EOS tokens are included, the picture diverges: the EOS fraction in Table~\ref{tab:main} is $2.4\%$ for \textsc{Forced-AR} but $7.9\%$ and $11.9\%$ for \textsc{Max-prob} and \textsc{Top-margin}. The mechanism is visible in Fig.~\ref{fig:unmask_order}(b,c): the model becomes confident about where the generation should end before it has finished writing the answer, so EOS has non-trivial probability at the rightmost tail positions while interior content is still masked. \textsc{Forced-AR}, bound to strict L2R, never exploits this; confidence-first rules open the tail. In panel~(b) only three content tokens are revealed L2R before, at step~$4$, the rightmost position commits to EOS at $0.50$ confidence, after which its neighbors activate as high-confidence EOS ($0.85$--$0.97$) and seventeen tail positions seal right-to-left in one cascade; only at step~$22$ does decoding return to the middle. Panel~(c) is a milder variant, with tail EOS committed before the answer is fully written but interleaved with content infill. Both are driven by the same trigger---growing certainty about the right boundary outweighing remaining content confidence---but differ only in when and how sharply it fires. The content Spearman of Finding~1 excludes EOS precisely to factor out this effect, which is why the content metric stays close to $1$ across all four panels despite visibly different EOS timing.

\begin{table}
\centering
\caption{Downstream accuracy (\%) per (benchmark, strategy) cell under greedy one-token-per-step decoding; mean content $\mathrm{Var}(\log q)$ in parentheses. \textbf{min-V} / \textbf{max-V} are prompt-level selectors (argmin-Var / argmax-Var): per prompt, pick the strategy with the lowest / highest content $\mathrm{Var}$ and read off its correctness. $\Delta =$ min-V $-$ max-V (pp). Best acc per row in \textbf{bold}; lowest Var \underline{underlined}.}
\label{tab:downstream}
\small
\setlength{\tabcolsep}{3pt}
\begin{tabular}{lrrrrrrr}
\toprule
Dataset & \multicolumn{4}{c}{Acc $\uparrow$ \,(Var $\downarrow$)} & min-V $\uparrow$ & max-V $\uparrow$ & $\Delta$ \\
\cmidrule(lr){2-5}
 & Forced & Max & Margin & Rand. & & & \\
\midrule
HellaSwag & \textbf{79.74}~(0.138) & 78.80~(\underline{0.090}) & 78.38~(0.101) & 75.76~(0.571) & 78.72 & 75.79 & $+2.9$pp \\
CMath     & \textbf{91.71}~(0.067) & 90.26~(\underline{0.042}) & 90.89~(0.045) & 69.22~(0.354) & 90.79 & 69.10 & $+21.7$pp \\
MMLU-Pro  & \textbf{54.78}~(0.110) & 53.12~(\underline{0.081}) & 52.03~(0.100) & 19.62~(0.573) & 52.83 & 19.99 & $+32.8$pp \\
IFEval    & 74.12~(0.289) & \textbf{75.05}~(\underline{0.216}) & 71.90~(0.255) & 67.28~(0.797) & 74.49 & 68.39 & $+6.1$pp \\
\bottomrule
\end{tabular}
\vspace{-0.7em}
\end{table}

\subsection{Downstream Tasks: Reasoning, Math, and Instruction Following}
\label{sec:downstream}

Theorem~\ref{thm:uniform} does not directly prove task accuracy; rather, it motivates uneven confidence profiles as a diagnostic of bottlenecked decoding trajectories. We next test whether this diagnostic transfers to externally graded correctness on standard benchmarks, where each generation receives a binary label rather than a self-reported likelihood. We run the four pure one-token-per-step orderings on four benchmarks and log the per-token log-probability of every revealed token along the way.

\begin{table}
\centering
\caption{Per-cell Var-vs-correctness on the $94{,}852$ graded generations behind Table~\ref{tab:downstream}. One row per (dataset, strategy) cell. $n_T/n_F$ are counts of correct/incorrect samples; $\overline{\mathrm{Var}}(T)$ and $\overline{\mathrm{Var}}(F)$ are the mean content $\mathrm{Var}(\log q)$ over correct/incorrect samples; gap $= \overline{\mathrm{Var}}(F) - \overline{\mathrm{Var}}(T)$ (positive $=$ lower Var when correct); MW $p$ is the one-sided Mann--Whitney $U$ test for $\mathrm{Var}(T) < \mathrm{Var}(F)$; $\rho$ is the Spearman correlation between per-sample $\mathrm{Var}(\log q)$ and the binary correctness flag in that cell.}
\label{tab:within_cell_main}
\small
\setlength{\tabcolsep}{3.5pt}
\begin{tabular}{llrrrrrrr}
\toprule
Dataset & Strategy & $n_T$ & $n_F$ & $\overline{\mathrm{Var}}(T)$ & $\overline{\mathrm{Var}}(F)$ & gap & MW $p$ & $\rho$ \\
\midrule
HellaSwag & Forced-AR  & 8007 & 2035 & 0.134 & 0.152 & $+$0.018 & $<\!10^{-4}$ & $-0.143$ \\
HellaSwag & Max-prob   & 7913 & 2129 & 0.086 & 0.103 & $+$0.017 & $<\!10^{-4}$ & $-0.168$ \\
HellaSwag & Top-margin & 7871 & 2171 & 0.096 & 0.117 & $+$0.021 & $<\!10^{-4}$ & $-0.180$ \\
HellaSwag & Random     & 7608 & 2434 & 0.544 & 0.654 & $+$0.109 & $<\!10^{-4}$ & $-0.177$ \\
\midrule
CMath     & Forced-AR  & 1007 &   91 & 0.066 & 0.082 & $+$0.016 & $1.5\!\times\!10^{-3}$ & $-0.089$ \\
CMath     & Max-prob   &  991 &  107 & 0.039 & 0.068 & $+$0.029 & $<\!10^{-4}$ & $-0.129$ \\
CMath     & Top-margin &  998 &  100 & 0.043 & 0.065 & $+$0.022 & $<\!10^{-4}$ & $-0.114$ \\
CMath     & Random     &  760 &  338 & 0.291 & 0.495 & $+$0.204 & $<\!10^{-4}$ & $-0.394$ \\
\midrule
MMLU-Pro  & Forced-AR  & 6591 & 5441 & 0.078 & 0.148 & $+$0.070 & $<\!10^{-4}$ & $-0.303$ \\
MMLU-Pro  & Max-prob   & 6391 & 5641 & 0.052 & 0.113 & $+$0.061 & $<\!10^{-4}$ & $-0.337$ \\
MMLU-Pro  & Top-margin & 6260 & 5772 & 0.058 & 0.144 & $+$0.086 & $<\!10^{-4}$ & $-0.342$ \\
MMLU-Pro  & Random     & 2361 & 9671 & 0.439 & 0.606 & $+$0.167 & $<\!10^{-4}$ & $-0.177$ \\
\midrule
IFEval    & Forced-AR  &  401 &  140 & 0.280 & 0.315 & $+$0.036 & $6.8\!\times\!10^{-2}$ & $-0.064$ \\
IFEval    & Max-prob   &  406 &  135 & 0.206 & 0.247 & $+$0.041 & $2.9\!\times\!10^{-2}$ & $-0.081$ \\
IFEval    & Top-margin &  389 &  152 & 0.240 & 0.293 & $+$0.054 & $1.5\!\times\!10^{-3}$ & $-0.127$ \\
IFEval    & Random     &  364 &  177 & 0.780 & 0.830 & $+$0.049 & $5.9\!\times\!10^{-2}$ & $-0.067$ \\
\bottomrule
\end{tabular}
\vspace{-0.7em}
\end{table}

\begin{table}
\centering
\caption{Two-axis downstream accuracy ($\%$) after median-splitting samples within each benchmark by content $\overline{\log q}$ and content $\mathrm{Var}(\log q)$, pooling the four strategies. The diagnostic is the joint location of a generation in the $(\overline{\log q}, \mathrm{Var})$ plane. Higher is better for $\overline{\log q}$; lower is better for Var. Best per row is in \textbf{bold}.}
\label{tab:two_axis_quadrant}
\small
\begin{tabular}{lrrrr}
\toprule
Dataset & hi-$\overline{\log q}$/lo-Var & hi-$\overline{\log q}$/hi-Var & lo-$\overline{\log q}$/lo-Var & lo-$\overline{\log q}$/hi-Var \\
\midrule
HellaSwag & \textbf{84.7} & 83.7 & 77.3 & 73.5 \\
CMath     & 92.7 & 89.6 & \textbf{94.1} & 77.0 \\
MMLU-Pro  & \textbf{62.3} & 60.7 & 34.4 & 22.7 \\
IFEval    & \textbf{77.9} & 72.1 & 64.7 & 67.3 \\
\bottomrule
\end{tabular}
\vspace{-0.7em}
\end{table}

\paragraph{Setup.}
We use the same LLaDA-2.1-mini checkpoint and one-token-per-step decoding implementation as above, with EOS early stopping and no correction procedure. The reveal order is the only factor varied across cells. For every revealed token we log $(\text{position},\,\text{token id},\,\log q_t)$, where $q_t = p_\theta(\hat{x}_t \mid R_{t-1})$ is the model's confidence in its own selected token at reveal time. EOS tokens are excluded from log-probability statistics; further decoding details are in App.~\ref{app:downstream_protocol}.
HellaSwag~\cite{zellers2019hellaswag} ($N{=}10{,}042$, validation set; commonsense 4-way MCQ); CMath~\cite{wei2023cmath} ($N{=}1{,}098$, test split; Chinese primary-school arithmetic word problems); MMLU-Pro~\cite{wang2024mmlupro} ($N{=}12{,}032$, test split; 10-way reasoning-heavy MCQ); IFEval~\cite{zhou2023ifeval} ($N{=}541$; rule-based instruction following with both prompt-strict and instruction-strict accuracies). All four strategies see the same prompts.

\paragraph{Finding 1: structured paths form a low-variance, higher-accuracy tier.}
Table~\ref{tab:downstream} shows the same tiering as the C4 experiments: \textsc{Random} has the lowest accuracy and the highest content $\mathrm{Var}(\log q)$ on every benchmark, while the three structured strategies cluster within $\pm 1.6$~pp with no stable winner inside the low-Var tier. This matches Theorem~\ref{thm:uniform}: the prediction is the separation between high-Var and low-Var paths, not an ordering among near-overlapping structured ones. Using content $\mathrm{Var}$ as a prompt-level selector (argmin-Var) matches the best single fixed strategy to within ${\sim}2$~pp on all four benchmarks and consistently exceeds the argmax-Var selector by $\Delta = +2.9$ to $+32.8$~pp: the low-variance path positively predicts correctness relative to the high-variance one.

\paragraph{Finding 2: the $\mathrm{Var}$--correctness link is not driven by strategy-level effects.}
To rule out that the pooled association is driven by \textsc{Random} alone, we restrict to the $4\times 4 = 16$ (benchmark, strategy) cells (Table~\ref{tab:within_cell_main}), in each of which both benchmark and strategy are held fixed. Correct samples have strictly lower mean $\mathrm{Var}$ than incorrect ones in \emph{all $16$} cells, with the one-sided Mann--Whitney $U$ test highly significant in most cells; the per-cell Spearman $\rho(\mathrm{Var},\,\mathbf{1}_{\text{correct}})$ is negative in every cell, including the structured-only ones. Thus $\mathrm{Var}$ genuinely tracks correctness rather than reflecting a strategy-level outlier.

\paragraph{Finding 3: mean confidence and variance are complementary axes.}
We next ask whether the Var signal is already captured by the classical $\overline{\log q}$. Splitting samples within each benchmark by median content $\overline{\log q}$ and median content $\mathrm{Var}(\log q)$ yields four quadrants (Table~\ref{tab:two_axis_quadrant}). At matched $\overline{\log q}$, low-$\mathrm{Var}$ is never worse than high-$\mathrm{Var}$. CMath is especially striking: the lo-$\overline{\log q}$/lo-Var quadrant reaches $94.1\%$, above the hi-$\overline{\log q}$/hi-Var quadrant at $89.6\%$---a low-confidence but low-variance generation is more reliable than a high-confidence but high-variance one, indicating that in some regimes $\mathrm{Var}$ carries signal beyond $\overline{\log q}$. We therefore recommend reporting $\overline{\log q}$ and $\mathrm{Var}(\log q)$ jointly rather than collapsing them into a single scalar.

\section{Discussion and Conclusion}
\label{sec:conclusion}

On the trained OALM studied here, $\overline{\log q}$ is not order-invariant: the ${\sim}0.5$-nat/token chain-rule deviation in LLaDA-2.1-mini is far above numerical tolerance, so mean confidence alone mixes content difficulty with the ordering of easy and hard positions. On natural language, the structured strategies (\textsc{Forced-AR}, \textsc{Max-prob}, \textsc{Top-margin}) trace near-identical \emph{content} reveal paths while \textsc{Random} forms the high-variance, lower-quality tier; on short-answer prompts, confidence-first rules additionally trigger a tail-EOS cascade that \textsc{Forced-AR} cannot exploit (\S\ref{sec:reveal_order}). Theorem~\ref{thm:uniform} supplies the order-dependent complement $\mathrm{Var}(\log q_t)$: even at fixed confidence budget, bottlenecked traces are less recoverable than uniformly spread ones---a minimal fixed-$P$ case, since the trained model is more complex with $\prod_t q_t(\pi)$ also order-dependent. Variance is thus not a substitute for accuracy or likelihood but a supplementary diagnostic for whether a path is stable or driven by bottlenecked, order-sensitive steps. The downstream results are consistent with this diagnostic view: low-variance paths form the higher-accuracy tier, and the Var--correctness association survives after fixing both benchmark and strategy (\S\ref{sec:downstream}). We therefore read $(\overline{\log q}, \mathrm{Var}(\log q))$ as a joint diagnostic plane: low $\overline{\log q}$ + high Var = poor and unstable; high $\overline{\log q}$ + high Var = average confidence driven by a few favorable steps; similar $\overline{\log q}$ but lower Var = the more stable order or policy. We recommend reporting them jointly. The primary strategy-level distinction we observe is between structured low-variance paths and \textsc{Random}; restoring chain-rule coherence is a training-side problem outside our scope.

\section{Limitations}
\label{sec:limitations}
(i)~While we use dLLMs as a representative OALM testbed, all experiments rely on one checkpoint and scale (LLaDA-2.1-mini); transfer to larger LLaDA variants and other architectures (MDLM, SEDD, XLNet) remains open. The proposed joint reporting of $\overline{\log q}$ and $\mathrm{Var}(\log q)$ is model-agnostic, but the empirical rankings should be interpreted at this scale.
(ii)~The main experiments focus on one-token-per-step decoding. Parallel decoding, where multiple tokens are committed simultaneously, introduces a separate batch-coupling effect; extending the variance diagnostic to jointly account for order and batch size is left for future work.
(iii)~Our variance analysis is diagnostic rather than a new deployable decoding algorithm. The prompt-level min-V selector uses logged trajectories to test whether low variance predicts correctness; designing an online policy that reduces variance without oracle comparisons is a separate problem.

\clearpage
\bibliographystyle{plain}
\bibliography{refs}

\clearpage
\appendix
\renewcommand{\thefigure}{S\arabic{figure}}
\setcounter{figure}{0}

\renewcommand{\thetable}{S\arabic{table}}
\setcounter{table}{0}

\section{Proofs and Derivations}\label{app:proofs}

\subsection{Setup: Chain Rule and Distortion Decomposition}\label{app:noise_model}

\paragraph{Notation.}
The prompt or context is denoted by $c$, the target sequence by $\mathbf{x}^*=(x_1^*,\ldots,x_n^*)$, and the reveal order by $\pi$, where $\pi(t)$ is the position revealed at step~$t$. The set of already revealed tokens before step~$t$ is $R_{t-1}$. We write $q_t$ for the model probability assigned to the target token $x^*_{\pi(t)}$ at that step, and $P=p(\mathbf{x}^*\mid c)$ for the reference sequence probability. The vocabulary is $\mathcal{V}$ with size $S=|\mathcal{V}|$; for LLaDA-2.1-mini, $S=32{,}000$. All logarithms are natural logarithms.

\paragraph{Chain-rule baseline.}
If the conditionals came from a coherent joint distribution $p(\mathbf{x}\mid c)$, then for any reveal order~$\pi$,
\[
\prod_{t=1}^{n} q_t(\pi)=P,\qquad
q_t(\pi)=p\!\left(x^*_{\pi(t)}\mid R_{t-1},c\right),\qquad
P=p(\mathbf{x}^*\mid c).
\]
The product is independent of the order. Empirically, however, LLaDA-2.1 changes $\log P/n$ by up to $0.49$ nats/token across reveal orders, so the learned conditionals should not be treated as exact factorizations of one joint distribution. We therefore use the chain-rule identity as an analytic reference point and separately track order-sensitive diagnostics.

\paragraph{Greedy decoding and distortion.}
Greedy decoding succeeds at step~$t$ only if the target token wins the local argmax. Let $f(q_t)$ denote the probability that this step recovers the target token when the model assigns target probability $q_t$. The effective sequence recovery probability is
\[
A(\pi)=\prod_{t=1}^{n} f(q_t).
\]
Multiplying and dividing by the chain-rule product isolates the order-dependent part:
\begin{equation}\label{eq:distortion_decomp}
A(\pi)=\prod_{t=1}^{n} f(q_t)
=\underbrace{\prod_{t=1}^{n}q_t}_{=\,P\ \text{(order-invariant reference)}}\;
\underbrace{\prod_{t=1}^{n}\frac{f(q_t)}{q_t}}_{D(\pi)\ \text{(order-dependent)}} .
\end{equation}
Thus $A=P\cdot D$. With $P$ fixed, maximizing $A$ is equivalent to maximizing the distortion term $D(\pi)$.

\paragraph{Independence clarification.}
The product form $A(\pi)=\prod_t f(q_t)$ is a conditional step-success model: at step~$t$, the success probability is evaluated conditioned on all previously revealed tokens being correct. The random logit noises $\epsilon_v^{(t)}$ are taken independent across steps in this model, while the confidences $q_t$ themselves need not be independent because they are determined by the revealed context $R_{t-1}$. Thus the factorization concerns the noise variables used to model local greedy decisions, not independence of the confidence trace.

\subsection{Explicit Greedy-Success Function}\label{app:greedy_success}

We next derive the one-step success probability $f(q)$ under the single-competitor Gaussian-logit model used in the proof. Let $q\in(0,1)$ be the model probability assigned to the target token. Greedy decoding assigns each token a noisy logit
\[
\ell_v=\log p_v+\epsilon_v,\qquad
\epsilon_v\overset{\mathrm{iid}}{\sim}\mathcal{N}(0,\sigma^2),
\]
and selects the largest logit. The target must beat every other token; equivalently, it must beat the strongest competitor, or runner-up. We therefore model this binding competitor directly: the target has probability $q$, and the runner-up carries the residual mass $1-q$. Concentrating the residual mass on one competitor is the hardest single-competitor allocation for the target to beat, so the resulting $f$ is a conservative one-step success model. This gives the two logits
\[
\ell_{\mathrm{target}}=\log q+\epsilon_{\mathrm{target}},\qquad
\ell_{\mathrm{comp}}=\log(1-q)+\epsilon_{\mathrm{comp}}.
\]
The target is selected iff
\[
\ell_{\mathrm{target}}>\ell_{\mathrm{comp}}
\quad\Longleftrightarrow\quad
\Delta\triangleq\epsilon_{\mathrm{comp}}-\epsilon_{\mathrm{target}}
<\log q-\log(1-q).
\]
Since $\Delta\sim\mathcal{N}(0,2\sigma^2)$, standardizing by its standard deviation $\sigma\sqrt2$ yields
\begin{equation}\label{eq:fclosed}
f(q)=\Pr\!\left[\Delta<\log q-\log(1-q)\right]
=\Phi\!\left(\frac{\log q-\log(1-q)}{\sigma\sqrt2}\right).
\end{equation}
Now set $y=\log q$ and define the standardized margin
\begin{equation}\label{eq:zdef}
z(y)=\frac{\log q-\log(1-q)}{\sigma\sqrt2}
=\frac{y-\log(1-e^y)}{\sigma\sqrt2}.
\end{equation}
Then
\[
\boxed{\;f(q)=\Phi(z(y))\;}
\]
where $\Phi$ is the standard-normal CDF,
\[
\Phi(a)=\Pr[Z\le a]=\int_{-\infty}^{a}\frac{1}{\sqrt{2\pi}}e^{-u^2/2}\,du,
\qquad Z\sim\mathcal{N}(0,1).
\]
The argument $z(y)$ is a standardized margin: its numerator is the target-vs-competitor log-probability gap, and its denominator $\sigma\sqrt2$ is the standard deviation of the noise difference. Thus $f(q)=\Phi(z(y))$ is the probability that the target beats the runner-up. This single-competitor expression is exact for the two-token comparison and does not depend on the vocabulary size. For greedy decoding, this is the per-step success probability used in the distortion decomposition, so $h=f$.

\subsection{Log-Space Objective}\label{app:log_space}

Jensen's inequality applies after converting the multiplicative constraint into a sum. Let
\[
y_t=\log q_t,\qquad L=\log P.
\]
Then $\prod_t q_t=P$ becomes
\[
\sum_{t=1}^{n} y_t=L.
\]
Define the log-recoverability function
\begin{equation}\label{eq:gdef}
g(y)\triangleq \log f(e^y)=(\log\Phi)(z(y)).
\end{equation}
The sequence log-recovery probability is therefore
\[
\log A=\sum_t \log f(q_t)=\sum_t g(y_t).
\]
The relevant concavity is concavity of $g$ as a function of $y=\log q$, not concavity of $f$ as a function of $q$.

\subsection{Concavity of the Log-Recoverability Function}\label{app:proof_concavity}

\begin{proof}
We prove concavity for $g(y)=(\log\Phi)(z(y))$ by explicitly differentiating both the outer function $\log\Phi$ and the inner function $z(y)$. In the outer-derivative calculation below, $z$ denotes a generic real argument of $\log\Phi$; after those derivatives are computed, we substitute back $z=z(y)$.

\paragraph{Step 1: chain rule.}
By the chain rule,
\begin{equation}\label{eq:gpp}
g''(y)=
\underbrace{(\log\Phi)''\!\bigl(z(y)\bigr)[z'(y)]^2}_{\text{Term 1}}
+
\underbrace{(\log\Phi)'\!\bigl(z(y)\bigr)z''(y)}_{\text{Term 2}} .
\end{equation}
We now compute each factor in this expression.

\paragraph{Step 2: outer derivatives.}
Let
\[
r(z)\triangleq\frac{\varphi(z)}{\Phi(z)}>0
\]
be the inverse Mills ratio, where $\varphi$ is the standard-normal density. Then
\[
(\log\Phi)'(z)=\frac{\Phi'(z)}{\Phi(z)}
=\frac{\varphi(z)}{\Phi(z)}
=r(z)>0.
\]
Using $\varphi'(z)=-z\varphi(z)$,
\[
(\log\Phi)''(z)=r'(z)
=\frac{\varphi'(z)\Phi(z)-\varphi(z)^2}{\Phi(z)^2}
=-\,r(z)\bigl(z+r(z)\bigr).
\]
Moreover, $z+r(z)>0$ for all $z\in\mathbb{R}$: for $z\ge0$ this follows from $r(z)>0$, and for $z<0$ the Mills bound $\Phi(z)<\varphi(z)/|z|$ gives $r(z)>|z|=-z$. Thus $(\log\Phi)''(z)<0$~\cite{bagnoli2005logconcave}.

\paragraph{Step 3: inner derivatives.}
From
\[
z(y)=\frac{y-\log(1-e^y)}{\sigma\sqrt2},
\]
and $\frac{d}{dy}\log(1-e^y)=-e^y/(1-e^y)$, we get
\[
z'(y)=\frac{1}{\sigma\sqrt2(1-e^y)}>0,\qquad
z''(y)=\frac{1}{\sigma\sqrt2}\frac{e^y}{(1-e^y)^2}>0.
\]

\paragraph{Step 4: substitute and factor.}
Substituting $(\log\Phi)'=r$ and $(\log\Phi)''=-r(z+r)$ into~\eqref{eq:gpp}, with $z=z(y)$, gives
\[
g''(y)
=-r(z)\bigl(z+r(z)\bigr)[z'(y)]^2+r(z)z''(y).
\]
Both terms contain the common positive factor $r(z)$, so
\begin{equation}\label{eq:gpp_factor}
\boxed{\;
g''(y)=r(z)\Bigl[z''(y)-\bigl(z+r(z)\bigr)[z'(y)]^2\Bigr].
\;}
\end{equation}

\paragraph{Step 5: reduce to a scalar criterion.}
Since $r(z)>0$, the sign of $g''$ is the sign of the bracket in~\eqref{eq:gpp_factor}. The condition $g''(y)<0$ is equivalent to
\[
\bigl(z+r(z)\bigr)[z'(y)]^2>z''(y).
\]
Substituting the expressions for $z'$ and $z''$, and writing $q=e^y$, gives
\[
\frac{z+r(z)}{2\sigma^2(1-q)^2}>
\frac{q}{\sigma\sqrt2(1-q)^2}
\quad\Longleftrightarrow\quad
\frac{z+r(z)}{2\sigma^2}>
\frac{q}{\sigma\sqrt2}
\quad\Longleftrightarrow\quad
z+r(z)>\sqrt2\,\sigma q.
\]
Thus
\begin{equation}\label{eq:concavity_criterion}
\boxed{\;
g''(y)<0
\quad\Longleftrightarrow\quad
z(q)+r\!\bigl(z(q)\bigr)>\sqrt2\,\sigma q,
\qquad
z(q)=\frac{\log q-\log(1-q)}{\sigma\sqrt2}.
\;}
\end{equation}
This criterion is the useful point. The left-hand side $z+r(z)$ is always positive by the Mills-ratio argument above, while the right-hand side $\sqrt2\,\sigma q$ scales linearly with the noise level. The sign is therefore controlled by $\sigma$, not by a separate assumption that $q\ll1$: as $q\to0$ the right-hand side also vanishes, and moderate or large values of $q$ can still satisfy the criterion when $\sigma$ is small.

\paragraph{Step 6: small-$\sigma$ sign.}
The noise scale $\sigma$ is the smoothing parameter that turns the deterministic argmax event into the smooth probability $\Phi(z)$; in the zero-noise limit this recovers the hard threshold $\mathbf{1}[q>\tfrac12]$. It is therefore natural to analyze the $\sigma\to0$ regime. Let $s=\sigma\sqrt2$ and $z=\mathrm{logit}(q)/s$. We show that for sufficiently small $\sigma$, the criterion~\eqref{eq:concavity_criterion} holds for all $q\in(0,1)$.

If $q\ge\tfrac12$, then $z\ge0$ and $z+r(z)\ge r(0)=\sqrt{2/\pi}$. Also $\sqrt2\,\sigma q=sq<s$, so the criterion holds whenever $\sigma\le1/\sqrt\pi$.

If $q<\tfrac12$, then $z\to-\infty$ as $\sigma\to0$. Mills' expansion gives
\[
z+r(z)=-\frac{1}{z}+O(z^{-3})
=\frac{s}{|\mathrm{logit}(q)|}+O(s^3)
=\frac{\sigma\sqrt2}{|\mathrm{logit}(q)|}+O(\sigma^3).
\]
Therefore
\[
z+r(z)-\sqrt2\,\sigma q
=\frac{\sigma\sqrt2}{|\mathrm{logit}(q)|}
\Bigl(1-q|\mathrm{logit}(q)|\Bigr)+O(\sigma^3).
\]
Since
\[
\max_{q\in(0,1/2)} q|\mathrm{logit}(q)|\approx0.2785<1,
\]
with the maximum attained near $q\approx0.218$, the factor $1-q|\mathrm{logit}(q)|$ is bounded below by approximately $0.7215>0$. Hence the leading term is positive, and the criterion holds for sufficiently small~$\sigma$ in this case as well.

Combining both cases,
\[
\boxed{\;
\exists\,\sigma_0>0:\ \forall\,\sigma\le\sigma_0,\ \forall\,q\in(0,1),
\quad g''(y)<0.
\;}
\]
Numerically, the admissible range extends to approximately $\sigma_0\approx1.05$; at $\sigma=0.1$, the swept margin $z+r-\sqrt2\sigma q$ remains positive over $q\in(0,1)$.
\end{proof}

\paragraph{Extension to multiple competitors.}
For $k$ competitors, let $\epsilon_0$ be the target noise and $\epsilon_1,\ldots,\epsilon_k$ the competitor noises. Define
\[
W_k\triangleq\max_{i=1,\ldots,k}(\epsilon_i-\epsilon_0),
\qquad
F_{W_k}(w)=\Pr[W_k\le w].
\]
If the residual mass $1-q$ is spread uniformly over $k$ competitors, the one-step success probability is
\[
f_k(q)=F_{W_k}\!\left(\log\frac{kq}{1-q}\right),
\]
where $(\epsilon_1-\epsilon_0,\ldots,\epsilon_k-\epsilon_0)$ is a mean-zero multivariate Gaussian with covariance $\sigma^2(I+\mathbf{1}\mathbf{1}^{\top})$. For $k=1$, $W_1\sim\mathcal{N}(0,2\sigma^2)$ and this reduces to~\eqref{eq:fclosed}. For $k\ge2$, $F_{W_k}$ has no elementary closed form, but $w\mapsto\log F_{W_k}(w)$ is concave by Pr\'ekopa's theorem~\cite{prekopa1973logarithmic}. Applying the same two-term decomposition as above gives the same mechanism. The two extremes are $k=1$ (a single dominant competitor) and $k=S-1$ (residual mass spread over all non-target vocabulary items); in realistic softmax distributions, only a small effective set of competitors usually carries substantial mass, so the regime lies between these extremes. Numerical checks with Zipfian competitor masses preserve the sign in the regime used here. Thus the competitor distribution mainly changes the effective noise scale; the single-competitor model is a clean base case rather than a load-bearing vocabulary-size assumption.

\subsection{Proof of Theorem~\ref{thm:uniform} (Uniform Spreading)}\label{app:proof_uniform}

\begin{proof}
We maximize
\[
\log A=\sum_{t=1}^{n}g(y_t)
\qquad\text{subject to}\qquad
\sum_{t=1}^{n}y_t=L.
\]
Since $g$ is concave by App.~\ref{app:proof_concavity}, Jensen's inequality gives
\[
\frac1n\sum_{t=1}^{n}g(y_t)
\le
g\!\left(\frac1n\sum_{t=1}^{n}y_t\right)
=g\!\left(\frac{L}{n}\right).
\]
Equality holds iff $y_1=\cdots=y_n=L/n$, equivalently
\[
\boxed{\;q_1=q_2=\cdots=q_n=P^{1/n}\;}.
\]
At this point $A$ takes the value $[f(P^{1/n})]^n$.

For uniqueness, equivalently form the Lagrangian
\[
J=\sum_t g(y_t)-\lambda\left(\sum_t y_t-L\right).
\]
The stationarity condition is $g'(y_t)=\lambda$ for all~$t$. Strict concavity makes $g'$ strictly decreasing and therefore one-to-one, so all $y_t$ must be equal. Concavity then makes this stationary point the global maximizer.
\end{proof}

\paragraph{Intuition.}
Greedy decoding is a chain of local decisions: if one step has too little confidence and loses the local argmax, the whole target sequence fails even if other steps are easy. Under a fixed total log-probability budget, spreading difficulty evenly across steps avoids such bottlenecks. This is the same water-filling principle in another form: fixed budget plus concave per-step utility implies a uniform optimum.

\paragraph{Why variance is the diagnostic.}
The theorem identifies the uniform point but not the local cost of moving away from it. Let
\[
\bar y=\frac{L}{n}=\frac1n\sum_t y_t.
\]
Expanding each $g(y_t)$ around $\bar y$ gives
\[
g(y_t)\approx
g(\bar y)+g'(\bar y)(y_t-\bar y)
+\frac12 g''(\bar y)(y_t-\bar y)^2.
\]
Summing over $t$, the linear term vanishes because $\sum_t(y_t-\bar y)=0$, and
\[
\sum_t(y_t-\bar y)^2=n\,\mathrm{Var}(\log q).
\]
Therefore
\[
\boxed{\;
\log A
\approx
\underbrace{n\,g(\bar y)}_{\text{set by mean }\overline{\log q}}
-\frac{n}{2}\bigl|g''(\bar y)\bigr|\,\mathrm{Var}(\log q).
\;}
\]
Thus, when the mean log-confidence is matched, the leading loss relative to the uniform optimum is proportional to $\mathrm{Var}(\log q)$. This gives the two diagnostics used in the experiments:
\[
\overline{\log q}=\frac1n\sum_t\log q_t,
\qquad
\mathrm{Var}_\pi=\frac1n\sum_t\left(\log q_t-\overline{\log q}\right)^2.
\]
The prediction is that, among paths with comparable $\overline{\log q}$, lower $\mathrm{Var}_\pi$ should correspond to higher recoverability. This is why the experiments report both $\overline{\log q}$ and $\mathrm{Var}(\log q)$ rather than collapsing the confidence trace to one scalar.

\section{Downstream-Evaluation Protocol and Diagnostics}
\label{app:downstream_protocol}

This appendix details the evaluation protocol for the downstream experiments
reported in \S\ref{sec:downstream} and Table~\ref{tab:downstream}, then gives
the paired diagnostic tests behind the per-sample uniformity claims.

\paragraph{Benchmarks and grading.}
\begin{itemize}
    \item \textbf{HellaSwag}~\cite{zellers2019hellaswag}: validation split,
        $10{,}042$ items total. Each example presents a context plus four
        candidate endings; greedy CoT generation~\cite{wei2022chain}, then ending selected by
        normalized log-probability of the four candidates given the
        generation.
    \item \textbf{CMath}~\cite{wei2023cmath}: full test split,
        $n{=}1{,}098$. Chinese primary-school arithmetic word problems; the
        last numeric span in the generated response is extracted and
        compared to the reference answer (string-equality after numeric
        normalization).
    \item \textbf{MMLU-Pro}~\cite{wang2024mmlupro}: test split,
        $12{,}032$ items. $0$-shot CoT~\cite{kojima2022large} with
        OpenCompass-aligned~\cite{opencompass2023} ``Answer: X'' extraction (regex on the last
        line); $10$-way MCQ.
    \item \textbf{IFEval}~\cite{zhou2023ifeval}: train split, full
        $n{=}541$ prompts ($834$ atomic instructions). Rule-based scorer;
        we report both prompt-strict (sample is correct iff every listed
        instruction is satisfied) and instruction-strict accuracy
        (instruction-level mean).
\end{itemize}

\paragraph{Strategies and decoding budget.}
We evaluate the four pure one-token-per-step orderings of \S\ref{sec:gen}:
\textsc{Forced-AR}, \textsc{Max-prob}, \textsc{Top-margin}, and
\textsc{Random}, all decoded greedily (softmax temperature $T{=}0$) with a maximum generation length of $16{,}384$ tokens, end-of-sequence (EOS) early-stop, and no token-correction.
Same prompt formatting across all four strategies. Generation uses a block-causal KV-cached one-token-per-step driver, and all four benchmarks are run on the same LLaDA-2.1-mini checkpoint.

\paragraph{Per-token logging.}
For every revealed token we record $(\text{position},\,\text{token id},\,\log q_t)$, where $q_t$ is the model's argmax probability at the time the position is committed (computed from the same forward pass that the strategy uses to choose the position). The content statistics in Table~\ref{tab:downstream_detail} ($\overline{\log q}$, $\mathrm{Var}(\log q)$, $n_{\text{tok}}$) are computed from this per-token log after dropping EOS tokens.

\paragraph{Sample-size and CI.}
For binary accuracy with observed proportion $\hat p$, the $95\%$ CI half-width
$1.96\sqrt{\hat p(1-\hat p)/n}$ is at most $\pm 4.2$pp at $n{=}541$ (IFEval),
$\pm 3.0$pp at $n{=}1{,}098$ (CMath), $\pm 1.0$pp at $n{=}10{,}042$
(HellaSwag), and $\pm 0.9$pp at $n{=}12{,}032$ (MMLU-Pro).
The Random-vs-rest gaps in Table~\ref{tab:downstream} ($6$--$35$pp) all exceed
this floor by an order of magnitude on every benchmark; the within-uniformity
gaps ($\leq 1.6$pp on HellaSwag and IFEval) do not. The per-sample tests in Table~\ref{tab:within_cell_main} are not subject to this $\sqrt{n}$ floor because they are paired tests with at least $500$ samples per cell.

\subsection{Downstream Diagnostic Tests}

\paragraph{Per-cell content statistics.}
Table~\ref{tab:downstream_detail} reports the per-cell content
$\overline{\log q}$, $\mathrm{Var}(\log q)$, and $n_{\mathrm{tok}}$
underlying the downstream analysis in the main text. All quantities exclude
EOS tokens and are averaged over the samples in the corresponding cell.

\begin{table}[ht]
\centering\small
\caption{Content-token statistics for each (dataset, strategy) cell behind
Table~\ref{tab:downstream}. Content excludes EOS tokens. Best per row in the
indicated direction is in \textbf{bold}; $n_{\mathrm{tok}}$ is reported for
completeness and has no monotone direction in the theory.}
\label{tab:downstream_detail}
\begin{tabular}{lcccc}
\toprule
Dataset & Forced-AR & Max-prob & Top-margin & Random \\
\midrule
\multicolumn{5}{l}{\textit{Content mean $\log q \uparrow$}} \\
HellaSwag & $-0.255$ & \textbf{$-0.211$} & \textbf{$-0.211$} & $-0.548$ \\
CMath     & $-0.132$ & \textbf{$-0.112$} & $-0.115$ & $-0.538$ \\
MMLU-Pro  & $-0.194$ & \textbf{$-0.170$} & $-0.183$ & $-1.408$ \\
IFEval    & $-0.470$ & \textbf{$-0.462$} & $-0.491$ & $-1.180$ \\
\midrule
\multicolumn{5}{l}{\textit{Content $\mathrm{Var}(\log q) \downarrow$}} \\
HellaSwag & 0.138 & \textbf{0.090} & 0.101 & 0.571 \\
CMath     & 0.067 & \textbf{0.042} & 0.045 & 0.354 \\
MMLU-Pro  & 0.110 & \textbf{0.081} & 0.100 & 0.573 \\
IFEval    & 0.289 & \textbf{0.216} & 0.255 & 0.797 \\
\midrule
\multicolumn{5}{l}{\textit{Content length $n_{\mathrm{tok}}$}} \\
HellaSwag &  138 &  135 &  137 &  130 \\
CMath     &  410 &  513 &  433 &  408 \\
MMLU-Pro  & 2488 & 2815 & 2922 & 3997 \\
IFEval    &  455 &  528 &  533 & 2309 \\
\bottomrule
\end{tabular}
\end{table}

\paragraph{Per-cell Spearman correlations.}
Table~\ref{tab:rho_var_corr} reports Spearman $\rho$ between per-sample
$\mathrm{Var}(\log q)$ and the binary correctness flag; the correlation is
negative in all sixteen cells (four datasets $\times$ four strategies),
supporting a monotonic rather than tail-driven relationship. The correlations
are strongest on MMLU-Pro for the structured strategies ($-0.30$ to $-0.34$)
and on CMath for Random ($-0.39$), and weakest on IFEval
($|\rho| \leq 0.13$). The pattern is consistent with the prediction that
Var-correctness coupling tightens on benchmarks with a single-token
bottleneck (MCQ ${>}$ math word problems ${>}$ free-form instruction
following).

\begin{table}[ht]
\centering\small
\caption{Per-cell Spearman $\rho$ of per-sample $\mathrm{Var}(\log q)$ with binary correctness.}
\label{tab:rho_var_corr}
\begin{tabular}{lrrrr}
\toprule
Dataset & Forced-AR & Max-prob & Top-margin & Random \\
\midrule
HellaSwag & $-0.143$ & $-0.168$ & $-0.180$ & $-0.177$ \\
CMath     & $-0.089$ & $-0.129$ & $-0.114$ & $-0.394$ \\
MMLU-Pro  & $-0.303$ & $-0.337$ & $-0.342$ & $-0.177$ \\
IFEval    & $-0.064$ & $-0.081$ & $-0.127$ & $-0.067$ \\
\bottomrule
\end{tabular}
\end{table}

\section{Greedy vs.\ Sampling: Additional Robustness Checks}
\label{app:greedy_vs_sample}

The theory and the main-text experiments are stated for greedy decoding ($T{=}0$). Here we ask two questions. \textbf{(Q1)} Does sampling help? That is, for a fixed reveal-order strategy, does softmax sampling at $T{>}0$ improve over greedy? \textbf{(Q2)} How does the separation of \textsc{Max-prob} decoding evolve under sampling? We sweep the softmax temperature $T \in \{0, 0.3, 0.5, 0.7, 1.0, 1.3, 1.5, 2.0\}$ for the four strategies \textsc{Forced-AR}, \textsc{Max-prob}, \textsc{Top-margin}, and \textsc{Random} on the open-ended generation setup of \S\ref{sec:gen} (one-token-per-step, $1{,}000$ C4 prompts, generation length $128$), and report per-strategy quality including externally-graded GPT-2 Large perplexity (Ext-PPL). We report the median Ext-PPL throughout because the per-prompt PPL distribution is heavy-tailed (a small number of generations have PPL $>10^3$ while the bulk stays below $30$).

\begin{figure}[ht]
    \centering
    \includegraphics[width=\textwidth]{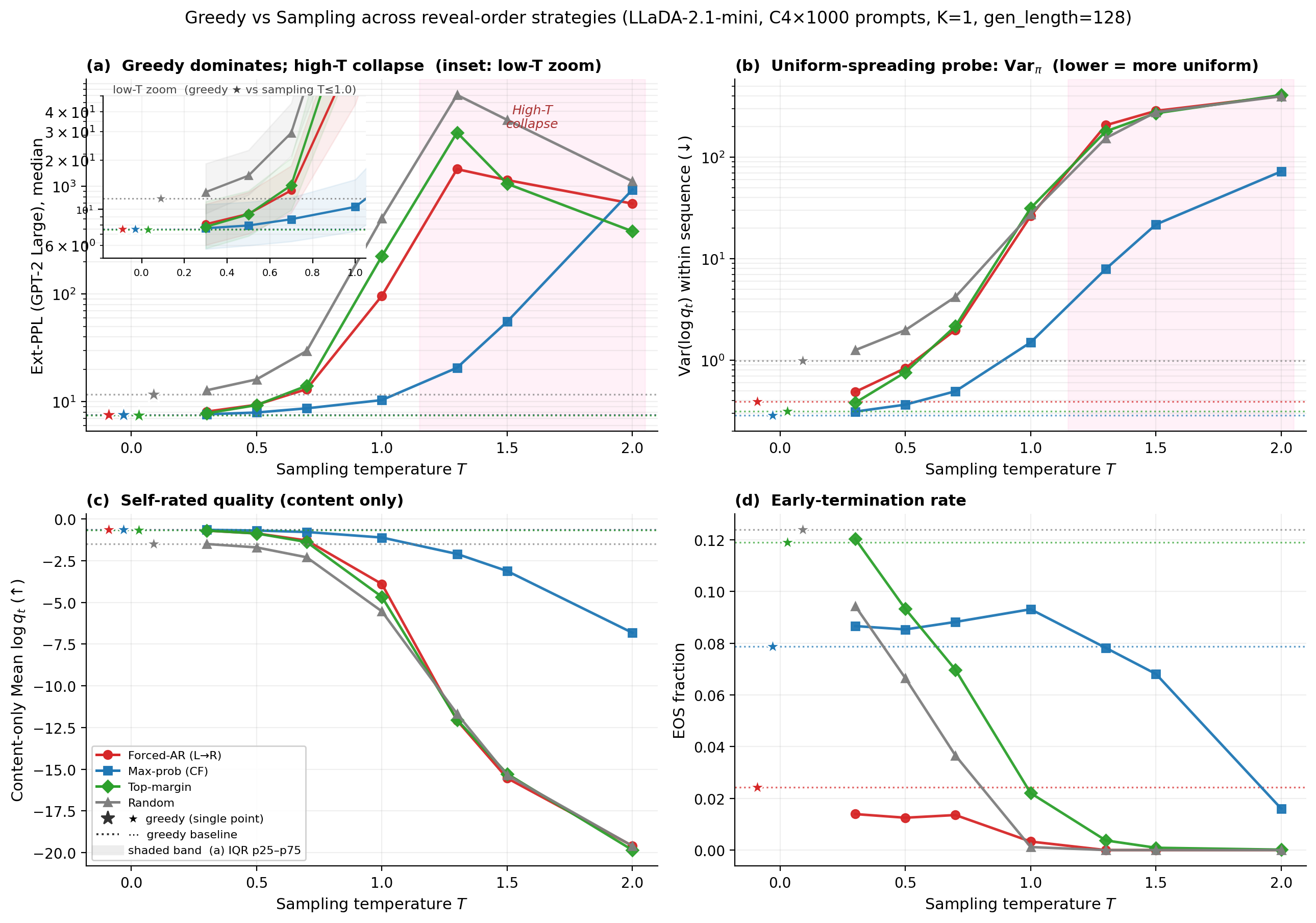}
    \caption{Greedy ($T{=}0$, shown as stars with a small horizontal offset to separate near-equal values) vs.\ sampling ($T{>}0$) across four reveal-order strategies on LLaDA-2.1-mini, $1{,}000$ C4 prompts, one-token-per-step, generation length $128$. (a) Ext-PPL (GPT-2 Large, median) over the full temperature range; inset zooms the low-$T$ regime so the four greedy baselines are visible. (b) Within-sequence log-prob variance $\mathrm{Var}(\log q_t)$ on log scale. (c) Self-rated mean content log-prob. (d) EOS fraction.}
    \label{fig:greedy_vs_sampling}
\end{figure}

\paragraph{(Q1) Greedy gives the lowest external perplexity in this sweep.}
Fig.~\ref{fig:greedy_vs_sampling}(a) answers (Q1) in the negative for this setting. For every reveal-order strategy, the greedy star sits at or below every point on the same-coloured sampling curve: external-perplexity medians are $7.56$ / $7.54$ / $7.47$ / $11.64$ for \textsc{Forced-AR}/\textsc{Max-prob}/\textsc{Top-margin}/\textsc{Random} at $T{=}0$, while the best sampling temperature for each strategy only matches (\textsc{Max-prob}: $T{=}0.3$ gives $7.64$) or underperforms greedy. This is consistent with Theorem~\ref{thm:uniform}: for a fixed policy $\pi$, injecting multiplicative per-step noise increases $\mathrm{Var}(\log q_t)$ and hence the deviation from the uniform-spreading optimum. Panel~(b) shows the same pattern: $\mathrm{Var}(\log q_t)$ increases monotonically with $T$ for all four strategies, with greedy giving the smallest variance.

\paragraph{(Q2) Sampling separates \textsc{Max-prob}.}
At $T{=}0$, the external-perplexity gap between \textsc{Forced-AR} ($7.56$), \textsc{Max-prob} ($7.54$), and \textsc{Top-margin} ($7.47$) is small; only \textsc{Random} ($11.64$) is separated from the structured strategies. At $T{>}0$, \textsc{Max-prob}'s external perplexity increases more slowly ($7.64 \to 7.94 \to 8.68 \to 10.35$ at $T \in \{0.3, 0.5, 0.7, 1.0\}$), while \textsc{Forced-AR} and \textsc{Top-margin} increase more rapidly ($13.1$ / $14.1$ at $T{=}0.7$; $96$ / $224$ at $T{=}1$). \textsc{Random} increases earliest. The same ordering appears on the uniform-spreading probe $\mathrm{Var}(\log q_t)$ in panel~(b): at $T{=}1$, \textsc{Max-prob}'s within-sequence variance is $1.50$, vs.\ $26.3$ / $31.4$ / $27.5$ for the other three, i.e.\ \textsc{Max-prob} is $\sim 20\times$ more uniform under sampling noise.
The cross-strategy spread on external perplexity (median max minus median min across the four strategies) grows from $4.2$ at $T{=}0$, to $5.1$ at $T{=}0.3$, to $20.9$ at $T{=}0.7$, to $495$ at $T{=}1$---a ${\sim}120\times$ increase as we move from greedy to moderate-temperature sampling. This supports the prediction of Theorem~\ref{thm:uniform}: the noise-tolerance margin associated with uniform spreading is small when there is little sampling noise, but grows with the noise level. Thus, although greedy remains best in this sweep, \textsc{Max-prob} changes more slowly than the other strategies under sampling.

\paragraph{High-temperature regime.}
At $T{\geq}1.3$ all four strategies enter a high-temperature regime: external perplexity exceeds $10^3$ (panel~(a), shaded band), content log-prob falls below $-12$ nats per token (panel~(c)), and the EOS fraction drops to nearly zero because the model no longer reaches a natural stopping point within the budget (panel~(d)). The shaded region in panels~(a)/(b) marks this regime; we exclude it from qualitative comparisons.

\paragraph{Relation to the main-text downstream protocol.}
The main-text downstream evaluation (\S\ref{sec:downstream}) uses greedy decoding. In this sweep, greedy is the best point on the temperature curve for each reveal-order strategy, while sampling primarily enlarges the cross-strategy gap and separates \textsc{Max-prob} from the other strategies.

\paragraph{Caveat.}
The temperatures here are applied uniformly across all positions; learnable position-dependent or step-dependent schedules (e.g.\ decreasing $T$ as decoding progresses) might shift the curves but should not affect the qualitative ordering predicted by the theory. External perplexity is scored by GPT-2 Large on the concatenated $(\text{prompt}, \text{gen-text})$, so it reflects open-ended coherence to an AR evaluator rather than task accuracy.

\end{document}